\documentclass[letterpaper,twocolumn,10pt]{article}
\usepackage{usenix,epsfig,endnotes}

% MINE
\usepackage{times}
\usepackage{amsmath}
\usepackage{hyperref}
\usepackage{url}
\usepackage{amsthm}
\usepackage{algorithmic}
\usepackage{algorithm}
\usepackage{amssymb}
\usepackage{amsfonts}
\usepackage{graphicx}
\usepackage{epsfig}
\usepackage{subfigure}
\usepackage{lipsum}
\usepackage{footnote}
\usepackage{enumitem}
\usepackage{array}

\usepackage{balance}
\usepackage{flushend}

\newtheorem{theorem}{Theorem}%[section]

\newtheorem{proposition}[theorem]{Proposition}

\begin{document}

%
% paper title
% Titles are generally capitalized except for words such as a, an, and, as,
% at, but, by, for, in, nor, of, on, or, the, to and up, which are usually
% not capitalized unless they are the first or last word of the title.
% Linebreaks \\ can be used within to get better formatting as desired.
% Do not put math or special symbols in the title.
\title{Reliable Agglomerative Clustering}

\author{Morteza Haghir Chehreghani\\
\normalsize Department of Computer Science and Engineering \\
\normalsize Chalmers University of Technology \\
\normalsize Email: {morteza.chehreghani@chalmers.se}
}

% conference papers do not typically use \thanks and this command
% is locked out in conference mode. If really needed, such as for
% the acknowledgment of grants, issue a \IEEEoverridecommandlockouts
% after \documentclass

% for over three affiliations, or if they all won't fit within the width
% of the page (and note that there is less available width in this regard for
% compsoc conferences compared to traditional conferences), use this
% alternative format:
%
%\author{\IEEEauthorblockN{Michael Shell\IEEEauthorrefmark{1},
%Homer Simpson\IEEEauthorrefmark{2},
%James Kirk\IEEEauthorrefmark{3},
%Montgomery Scott\IEEEauthorrefmark{3} and
%Eldon Tyrell\IEEEauthorrefmark{4}}
%\IEEEauthorblockA{\IEEEauthorrefmark{1}School of Electrical and Computer Engineering\\
%Georgia Institute of Technology,
%Atlanta, Georgia 30332--0250\\ Email: see http://www.michaelshell.org/contact.html}
%\IEEEauthorblockA{\IEEEauthorrefmark{2}Twentieth Century Fox, Springfield, USA\\
%Email: homer@thesimpsons.com}
%\IEEEauthorblockA{\IEEEauthorrefmark{3}Starfleet Academy, San Francisco, California 96678-2391\\
%Telephone: (800) 555--1212, Fax: (888) 555--1212}
%\IEEEauthorblockA{\IEEEauthorrefmark{4}Tyrell Inc., 123 Replicant Street, Los Angeles, California 90210--4321}}

% use for special paper notices
%\IEEEspecialpapernotice{(Invited Paper)}

\maketitle

%\footnote{Morteza Haghir Chehreghani is with Chalmers University of Technology, Department of Computer Science and Engineering.
%Mostafa Haghir Chehreghani is with Amirkabir University of Technology, Department of Computer Engineering and IT.
%}

\begin{abstract}
Standard agglomerative clustering suggests  establishing  a new \emph{reliable} linkage at every step. However, in order to provide adaptive, density-consistent and flexible solutions, we study  extracting all the \emph{reliable} linkages at each step, instead of the smallest one. Such a strategy can be applied with all common criteria for agglomerative hierarchical clustering. We also  study that  this strategy with the \emph{single} linkage criterion yields a minimum spanning tree algorithm. We perform experiments on several real-world datasets to demonstrate the  performance of this  strategy compared to the standard alternative.
\end{abstract}

\section{Introduction}

Clustering plays an essential role in data processing and management such as text processing, image segmentation, compression, summarization, knowledge management, network analysis, and bioinformatics. The goal of data clustering is to partition the data into groups such that the objects in the same cluster are more similar in some sense, compared to the inter-cluster objects.
A category of clustering methods partition the data into $K$ flat clusters via for example optimizing a cost/objective function. Examples of this type of methods are $K$-means~\cite{mcqueen1967smc}, normalized cut~\cite{Shi:2000:NCI} and spectral clustering~\cite{Shi:2000:NCI,Ng01onspectral}, where all produce \emph{flat} clusters without any explicit relation between them. In practice, however, the different clusters often do not carry the same information content, i.e., some are more detailed than the others. Thus, in an exploratory data analysis approach, it is desired to propose the clusters at different levels and resolutions, such that both general and specific information are preserved. In this way, the user has more control to choose the desired resolution or even investigate the clusters at different levels and resolutions. For this reason, hierarchical clustering is often more practical is many applications and situations, where the results are usually presented by a dendrogram. A dendrogram is a tree wherein each node represents a cluster and its final nodes (the nodes connected to only one other node) correspond to the objects. A node at a higher level includes the combination of the lower-level clusters and the edge weights (and their lengths) represent the inter-cluster distances.

Hierarchical clustering methods, in general, fall into two categories: agglomerative (bottom-up) and divisive (top-down) \cite{Maimon:2005}.
Agglomerative algorithms consider each object as a separate cluster, and then combine the clusters in a greedy manner to build larger clusters, until at the end there is only one single cluster.
Divisive methods, in an opposite way, start with a single cluster including all objects. Then, at each step, the clusters are divided into two parts to produce finer clusters.
Agglomerative methods are more common for hierarchical clustering, and they are usually computationally more efficient than divisive methods~\cite{podani2000introduction}. In these approaches, the clusters might be combined or divided according to different criteria, e.g., \emph{single}, \emph{complete}, \emph{average}, \emph{centroid} and \emph{Ward}.

Several methods have been developed to improve the different aspects of these algorithms.
\cite{AckermanB16} studies the locality and outer consistency of agglomerative algorithms in an axiomatic way.
The works in \cite{biom12647,Levenstien2003} consider the statistical significance of hierarchical clustering. \cite{Dasgupta:2016,Charikar:2017:AHC,Roy:2016:HCV,Roy:2017:HCV,NIPS2017_7200} investigate the optimization aspects of hierarchical clustering and develop several approximate solutions. To provide robustness in pairwise inter-clusters relations, K-Linkage in \cite{Yildirim2017} investigates multiple pairs of distances for each pair of clusters, \cite{Balcan:2014:RHC} uses  global information for determining the similarities between the clusters,
 \cite{ChehreghaniCA12} trains a Bayesian network to infer the relations between the items to be clustered,
and \cite{ChehreghaniAC08} suggests applying agglomerative methods to small dense subsets of the data instead of the original data. The work in \cite{CHIDANANDAGOWDA1978105} performs the hierarchical clustering on $K$-nearest neighbor graph where fixing a proper $K$ (and the other hyper-parameters) can be nontrivial as discussed in \cite{Luxburg07}. The works in \cite{KarypisHK99,FrantiVH06} might suffer from the same issues.
The methods in \cite{doi:10.1198,jcgs2012} investigate combining aggolomerative methods with probabilistic models which then yields an extra computational complexity. Finally, \cite{Gu:2013:EBC,abs-1109-2378,Cochez:2015:TTA} develop efficient (and approximate) implementations of aggolomerative methods.

%Axiomatic hierarchical clustering : Hierarchical Clustering: Objective Functions and Algorithms: aximatic

In this paper, we focus on agglomerative hierarchical clustering. We consider that the standard  agglomerative algorithms usually select a minimal \emph{reliable} linkage at each step. We call a linkage between two clusters \emph{reliable} if both clusters are the nearest neighbors of each other.  Linkages represent the inter-cluster distances according to a criterion such as  \emph{single} or \emph{average} distance. A \emph{reliable} linkage provides the two clusters at its two sides to be consistent and share similar properties. However, in order to be adaptive w.r.t. the data diversity and variability, we investigate extracting at each step all the \emph{reliable} linkages, instead of the smallest one. This strategy, called \emph{reliable agglomerative clustering}, enables every object to potentially contribute from the early steps of constructing the dendrogram and, thus, clusters with different shapes and densities can evolve from the beginning. This  strategy, similar to the standard agglomerative procedure, can be used with all the common criteria, and it is adaptive to the shape and density of the clusters. A similar idea has been proposed in \cite{Bruynooghe1977} in an abstract form  without further investigations and analysis.
We note that this contribution is orthogonal to the aforementioned methods which aim to improve in particular agglomerative clustering, such that  any of those improvements can be employed with this strategy  too. For example, similar to \cite{ChehreghaniAC08}, we may build the dendrogram from the dense subsets of the data or use global information for computing the base pairwise (dis)similarities \cite{Balcan:2014:RHC}. We may also apply the feature extraction method in \cite{ChehreghaniC20} to infer proper unsupervised representations.
 In the following, inspired by the equivalence of  \emph{single} linkage clustering and the Kruskal's algorithm for computing minimum spanning trees~\cite{Kruskal1956}, we study that \emph{reliable agglomerative clustering} with \emph{single} criterion also yields a minimum spanning tree.  We perform extensive experiments on several real-world datasets to demonstrate the performance of this method compared to the standard approach.

%A rather relevant method is sometimes used in clustering for constructing neighborhood graphs, called mutual $k$-nearest neighbor graph~\cite{Brito1997,MaierHL09}. In this setting, each object first selects  its $k$ nearest neighbors. Then, a valid edge is established if and only if the objects at its two sides choose each other. Thus, the \emph{flat} (i.e., not hierarchical) clusters can be obtained by enumerating the connected components~\cite{Brito1997}. However, there are several fundamental differences between this method and our approach: The mutual $k$-nearest neighbor method establishes the edges between the objects and not the clusters. Then, most of the established edges occur inside small dense regions or small clusters. For this reason, as investigated in~\cite{Luxburg07,MaierHL09}, the selected $k$ must be a large number, i.e., in the order of the number of objects $n$. However,  such a choice can easily lead to combining even well-separated clusters.  In particular, when the size of clusters varies or the clusters have elongated shapes, then, even a fairly small $k$ can combine different clusters. Thereby, as suggested in~\cite{Luxburg07,MaierHL09}, mutual $k$-nearest neighbor is  appropriate only for detecting significant cluster seeds. On the other hand, this method yields only flat clusters.
%
The rest of the paper is organized as the following. In the second section we introduce the reliable agglomerative  strategy, and in the third section we study the connection to minimum spanning trees. We experimentally investigate \emph{reliable agglomerative clustering} in the next section, and finally, we conclude the paper in the last section.

\section{Reliable Agglomerative Clustering}

\begin{figure*}[!t]
    \centering
     \subfigure[]
    {
        \includegraphics[width=0.25\textwidth]{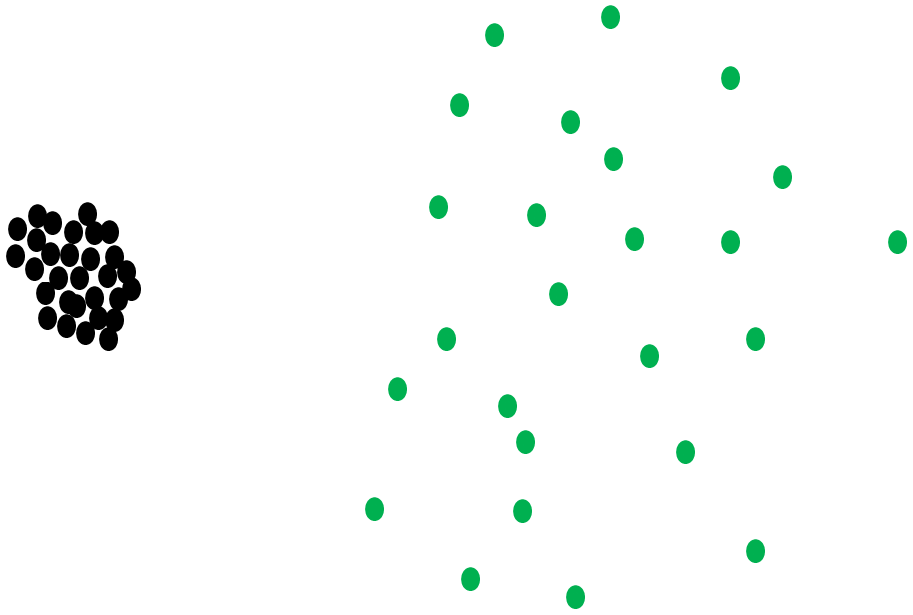}
        \label{fig:Context-Sensitive-Clust1}
    }
    \unskip\ \vrule\
    \subfigure[]
    {
        \includegraphics[width=0.15\textwidth]{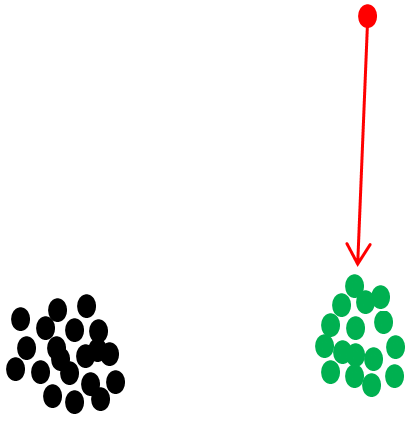}
        \label{fig:Context-Sensitive-Clust2}
    }
    \unskip\ \vrule\
    \subfigure[]
    {
        \includegraphics[width=0.18\textwidth]{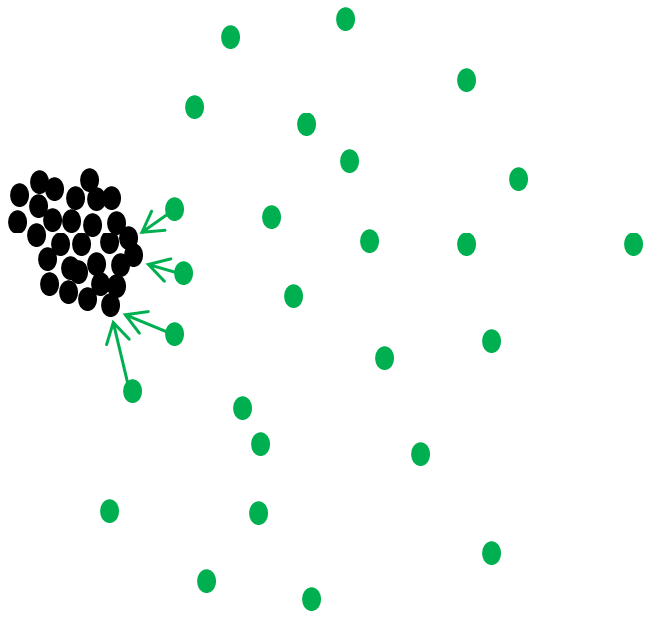}
        \label{fig:Context-Sensitive-Clust3}
    }
    \unskip\ \vrule\
    \subfigure[]
    {
        \includegraphics[width=0.15\textwidth]{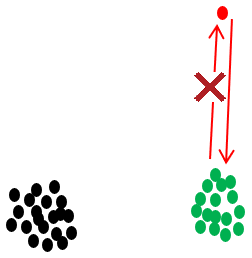}
        \label{fig:Context-Sensitive-Clust4}
    }
    \caption{The standard agglomerative clustering is sensitive to data diversity (Fig \ref{fig:Context-Sensitive-Clust1}). On the other hand, allowing each object/cluster to connect to its nearest neighbor (regardless of if it happens in the nearest neighborhood of the other side too), can lead to inappropriate results (Fig \ref{fig:Context-Sensitive-Clust2} and \ref{fig:Context-Sensitive-Clust3}). Therefore, extracting only \emph{reliable} linkages
    (but all instead of the smallest)
     avoids such situations (Fig\ref{fig:Context-Sensitive-Clust4}).}
    \label{fig:Context-Sensitive-Clust}
\end{figure*}

In this section, we describe \emph{reliable agglomerative clustering} and discuss the connection to computing a minimum spanning tree.

\subsection{A generic view to  agglomerative clustering}
Data are characterized by a set of $n$ objects $\mathbf O= \{0,...,n-1\}$ and a relevant representation. The representation can be for example the vectors in a vector space  or the pairwise dissimilarities between the objects. In the former case, the measurements are shown by the $n\times d$ matrix $\mathbf X$, where the $i^{th}$ row (i.e., $\mathbf X_i$) corresponds to the $d$ dimensional vector of the  $i^{th}$ object. In the latter form, an $n \times n$ matrix $\mathbf D$ represents the pairwise dissimilarities between the objects. A cluster  is shown by $C_p$, which is the set of the object indices that it contains. The function $dist(C_p,C_q)$ denotes the inter-cluster distances that can be defined according to different criteria.

Agglomerative methods follow an iterative procedure where at each step, two clusters (nodes) are combined to build a larger cluster. The procedure continues until there is only one cluster left. The algorithm at each step selects the two clusters that have a \emph{minimal} distance according to a criterion, i.e., a specific definition of $dist(.,.)$. For example, the \emph{single} linkage criterion~\cite{sneath1957dn09j} defines the distance between two clusters as the distance between the nearest members of the clusters. Opposite to this strategy, the \emph{complete} linkage criterion~\cite{lance67hierarchical} defines the distance of two clusters as the distance between their farthest members, that corresponds to the maximum within-cluster distance of the new cluster. On the other hand, in \emph{average} criterion~\cite{sokal58} the average of inter-cluster distances is used as the distance between the two clusters.
Some other methods, e.g., the \emph{centroid} and the \emph{median} criteria, determine a representative for each cluster and then compute the inter-cluster distances by the distances between the representatives. For example, with the \emph{centroid} criterion the representatives are the means of the clusters and at each step, the two clusters with closest centroids are combined to construct a larger cluster.

Another category of agglomerative methods aim to optimize a criterion such as \emph{homogeneity}.  An important instance is the \emph{Ward} method~\cite{Inchoate:Ward63} which aims to minimize  the total within-cluster variance at each step.
However, this criterion can be written as
\begin{align}
dist(C_p,C_q) &= \sum_{i\in C_p\cup C_q}||\mathbf X_i -  \mathbf m_{C_p\cup C_q}||^2  \nonumber \\
& \;\; - \sum_{i\in C_p}||\mathbf X_i -  \mathbf m_{C_p}||^2 - \sum_{j\in C_q}||\mathbf X_j -  \mathbf m_{C_q}||^2   \nonumber \\
&= \frac{|C_p||C_q|}{|C_p|+|C_q|} ||\mathbf m_{C_p} -  \mathbf m_{C_q}||^2 \, , \nonumber
\end{align}
where $ \mathbf m_{C_p}$ denotes the centroid  vector of  cluster $C_p$.

Thus, the \emph{Ward} method also  at each step combines the two clusters with a minimal distance, where the inter-cluster distances are defined as the distances between the cluster means normalized by a function of the size of the clusters.

\subsection{Reliable agglomerative clustering strategy}
We begin with analyzing  the performance of the \emph{single} linkage method, in particular on the data with diverse densities. Such an analysis can be applied to the other criteria as well. We  first consider the data shown in Figure~\ref{fig:Context-Sensitive-Clust1}, which includes two clusters with different densities. The \emph{single} linkage method starts first from the dense data cloud at the left side (shown by black points) and then performs grouping the members of the cluster at the right side (shown by green points). Such that if we stop the clustering early, then, we will have only the members of the cluster at the left side grouped together. The reason is that picking the smallest inter-cluster distance (linkage) does not necessarily yield contributing every object/cluster to building the dendrogram. In particular, as we saw, this approach is sensitive to the density of the clusters and tends to first extract the densest clusters. One way to overcome this issue and take the variance of clusters into account is to require each object/cluster to participate in building the dendrogram. One might interpret the standard agglomerative strategy for selecting the smallest inter-cluster linkage  as i) find the nearest neighbors of the current objects/clusters to obtain the set of potential linkages\footnote{The nearest neighbors are defined according to the $dist(.,.)$ function, which can encode any criterion (e.g., \emph{single}, \emph{complete}, \emph{average}, \emph{centroid} and \emph{Ward}).}, and ii) then pick the smallest linkage.

Therefore, one way to render contributing many objects/clusters in building the dendrogram is to choose all the linkages instead of the smallest one, which makes the dendrogram grow simultaneously from all the objects.
However, allowing all the linkages corresponding to any nearest neighbor might be inappropriate, as it can be sensitive to the presence of outliers or to the clusters which are close but have different densities. Two examples are illustrated in Figures~\ref{fig:Context-Sensitive-Clust2} and~\ref{fig:Context-Sensitive-Clust3}. If we pick  all of the linkages, the red object at the top in Figure~\ref{fig:Context-Sensitive-Clust2}  would establish a linkage to the green cluster (with the closest object of it) at the first level of the dendrogram. However, we know that such a linkage should be established at a higher level, after the members of the green data cloud merge and build their own cluster first. Therefore, this linkage is not a \emph{reliable} linkage, as the two objects at its two sides do not share similar properties and densities. On the other hand, in Figure~\ref{fig:Context-Sensitive-Clust3}, the two green and black clusters are close to each other, such that some objects of the green data cloud choose the members of the black data cloud as  the nearest neighbors, instead of choosing from the green data cloud. This occurs due to the different  densities of the clusters. Therefore, one should be careful in choosing any nearest neighbor linkage. In these examples, the objects/clusters at the two sides of a linkage have different properties and densities. In the example of Figures~\ref{fig:Context-Sensitive-Clust2}, the red object is an outlier whose neighborhood is empty, unlike the neighborhood of the object at the other side, which is significantly denser. Thus, the red object establishes a linkage with one of the green objects, but this object selects another object as its nearest neighbor. In Figures~\ref{fig:Context-Sensitive-Clust3}, some of the green objects establish linkage to some of the black objects, which have a different (i.e., higher) densities around. Therefore, the black objects do not select these green objects as their nearest neighbors. This analysis leads to investigate the \emph{reliability} of linkages established by different objects/clusters, defined in {Definition 1}.

\noindent\textbf{Definition 1}.  \emph{A linkage between two clusters $C_p$ and $C_q$ is `reliable' if and only if both clusters are nearest neighbors of each other, i.e., $C_q \in nn(C_p)$ and $C_p \in nn(C_q)$, where $nn(C_p)$ returns the nearest clusters of cluster $C_p$.}\footnote{A cluster may include only one single object, i.e., each object is a cluster at the lowest level of the dendrogram.}

Note that a cluster might have several nearest neighbors, i.e., $|nn(C_p)|\ge 1$.

Therefore, instead of establishing the linkage(s) from every cluster/object, we select only a subset that are \emph{reliable}. Such an approach provides the clusters at the two sides of a linkage to share consistent neighborhood and densities. Thus, merging them to build a larger cluster becomes meaningful. Then, it avoids non-robust linkages, for example merging the outlier objects at the lowest levels (Figure \ref{fig:Context-Sensitive-Clust4}).
Proposition \ref{lemma:stable_linkage_smallest} indicates that  a linkage with a minimal length is \emph{reliable}, i.e., the standard  agglomerative strategy which combines only the nearest clusters at each step performs reliable selections.

\begin{proposition}
Given a set of clusters $\{C_i\}$ and the respective linkages between them, a linkage with minimal length (called $e^*$) is a `reliable' linkage.
%\footnote{The proofs are discarded, due to space limit.}
\label{lemma:stable_linkage_smallest}
\end{proposition}
\begin{proof}[Proof sketch]
We denote the clusters at the two sides of $e^*$ respectively $C_p$ and $C_q$. Since $e^*$ has a minimal length among all linkages, thus, it will also be the smallest linkage connected to $C_p$ and the same for $C_q$. Therefore, $C_p$ is the nearest neighbor of $C_q$ and $C_q$ is the nearest neighbor of $C_p$,  which makes the corresponding linkage (i.e., $e^*$) \emph{reliable}.
\end{proof}

However, a minimal linkage is \emph{not} the only \emph{reliable} linkage, in particular when the data contain clusters with diverse densities, as demonstrated in Figure \ref{fig:Context-Sensitive-Clust1}. Thus, in order to build the dendrogram in a density-aware and adaptive way, at each level we may select \emph{all} the linkages that are \emph{reliable}. Thereby, this strategy at each step first finds \emph{all the reliable} linkages, and then combines the respective clusters to build a larger cluster at a higher level. Algorithm \ref{alg:strong_agglomerative} describes the procedure in detail  providing an implementation of the high-level method in  \cite{Bruynooghe1977}.

\begin{algorithm}[ht!]
\caption{Reliable Agglomerative Clustering.
%, consistent with the high-level method in \cite{Bruynooghe1977}.
}
\label{alg:strong_agglomerative}
\begin{algorithmic} [1]
\REQUIRE {Objects and the measurements.}
\ENSURE List of the clusters at different levels stored in $Cluster\_List$.
%\vspace{2mm}

\STATE $l =0\;\;\;\;\;\;$ // $l$ specifies the current level
\FORALL{$i \in \mathbf O$}
\STATE $Cluster\_List[l].add(\{i\})$
\ENDFOR

%\vspace{2mm}

\WHILE{$|Cluster\_List[l]| > 1$}

%\vspace{2mm}

\STATE  $min\_dist = []$

\FOR {$0 \le p < |Cluster\_List[l]|$}
	\STATE $min\_dist.add(\min_{0 \le q < |Cluster\_List[l]| , q \ne p}$\\$\;\,dist(Cluster\_List[l][p],Cluster\_List[l][q]))$
\ENDFOR

%\vspace{2mm}

\STATE Initialize matrix $\mathcal G$ by $\mathbf 0$.
% $\mathcal G \in \{0,1\}^{|Cluster\_List[level]| \times |Cluster\_List[level]|}$ by $\mathbf 0$.

\FOR {$p,q \in \{0,..,|Cluster\_List[l]|-1\}, p \ne q$}
	%\IF{$dist(P,Q) = \min_{R \in Cluster\_List[level]}dist(P,R)$ \textbf{and} $dist(Q,P) = \min_{R \in Cluster\_List[level]}dist(Q,R)$}
	\IF{$dist(Cluster\_List[l][p],Cluster\_List[l][q])= min\_dist[p]$ \textbf{and} \\ $\;\;\;dist(Cluster\_List[l][p],Cluster\_List[l][q])= min\_dist[q]$}
		\STATE $\;\;\;\mathcal G[p,q]=1$
		\STATE $\;\;\;\mathcal G[q,p]=1$
	\ENDIF
\ENDFOR

%\vspace{2mm}

\COMMENT{Extract the connected components of $\mathcal G$}:

\STATE $CC = connected\_components(\mathcal G)$.
\FORALL{$cc \in CC$}
\STATE $new\_cluster = Cluster\_List[l][cc]$
\STATE $Cluster\_List[l+1].add(new\_cluster)$
\ENDFOR

\STATE $l = l + 1$

\ENDWHILE

\RETURN $Cluster\_List$
\end{algorithmic}
\end{algorithm}

In this algorithm, $Cluster\_List$ is used to store the clusters at different levels, such that $Cluster\_List[l][p]$ gives the $p^{th}$ cluster at the level $l$.  The variable $l$ indicates the current level while building the dendrogram. At the beginning, each individual object constitutes a separate cluster at level $0$. Next, the distance of each cluster at the current level $l$ (stored in $Cluster\_List[l]$) to its nearest neighbor is computed and stored in $min\_dist$. Function $dist(,)$ computes the inter-cluster distance between the two input clusters, according to a predefined criterion, e.g., \emph{single}, \emph{complete} and so on.
%\footnote{For the criteria that we study in the paper, the inter-cluster distances are \emph{symmetric}, i.e., $dist(C_i,C_j) = dist(C_j,C_i)$.}
Then, in graph $\mathcal G \in \{0,1\}^{|Cluster\_List[l]| \times |Cluster\_List[l]|}$ (whose nodes represent the cluster indices at $Cluster\_List[level]$) an edge is established if and only if the two respective clusters are nearest neighbors of each other (i.e., the linkage is \emph{reliable}). Note that a cluster might have several nearest neighbors, i.e., several clusters might have the same (smallest) distance from that. At the next step, the connected components of the graph $\mathcal G$ are extracted, where each of them represents a new cluster at a higher level. Thus, the clusters at the same connected components are combined to build a new single cluster at the higher level. This procedure (i.e., finding the nearest neighbors and combining them to build new higher-level clusters) continues until only one cluster is left at the highest level.

Notice that the several improvements developed for the standard strategy can be applied to this strategy as well.
The computational complexity of this  strategy is similar to the complexity of the standard variant. Both strategies establish in total $n-1$ linkages.  For this, they compute the inter-cluster distances (linkages) according to a priori fixed criterion, and for each selected linkage, they merge the respective clusters and update the new inter-cluster linkages.  Therefore, the  operations and the computations are similar, whereas the choice of specific linkages and/or the order might differ which can lead to different dendrograms. Selecting all reliable linkages, instead of the smallest one, may reduce the overall number of steps, but it might need more merges at each step. However, as mentioned, the total number of merges is the same for both strategies.

On the other hand, as mentioned, an important computational advantage of the reliable strategy is the possibility of \emph{early stopping}. It builds and develops several clusters simultaneously, whereas the standard approach develops fewer clusters at the same time. Thus, if we stop at early/intermediate steps, it is more likely that we obtain good representatives of different clusters. But, with the standard strategy, it could happen that only a few clusters are developed and the rest have not even been started yet. This might happen in particular when the clusters have diverse densities and shapes. Therefore, early-stopping, to reduce the computational time, can be more effective with \emph{reliable agglomerative clustering}. For example, the early clusters can be exposed to the user  to select only the interesting and relevant ones to develop further.

Algorithm~\ref{alg:strong_agglomerative} enables every object to \emph{potentially} participate in building the dendrogram from the beginning, depending on having a reliable linkage. In other words, establishing and selecting a linkage and therefore growth of a cluster depends only on the relation of an object/cluster to its neighbors, independent of the relations of the other object/clusters with each other. However, this is not the case for the standard agglomerative clustering.
Thus, if we stop the algorithm early, then, we will possibly have representatives of many clusters which correspond to the denser and more important (informative) parts. On the other hand, the outlier objects do not occur in the nearest neighborhood of many other clusters or objects. Thus, they join the other parts of the dendrogram only at the higher levels. Thereby, Algorithm~\ref{alg:strong_agglomerative} can be employed to provide a systematic way to separate structure from noise and outlier objects at different resolutions. The probability of object $i$ being an outlier is proportional to the level at which the object joins to the other objects/clusters,  i.e.,
\begin{equation}
\text{Pr}[i \in outlier] \propto l^*(i) \, ,  \nonumber
\end{equation}
where $l^*(i)$ specifies the level at which object $i$ joins to one of the other clusters/objects for the first time.
The higher $l^*(i)$ is, the larger the outlier probability is. We postpone the detail to future work.
% Thereby, in this framework, one can even analyze the outlier clusters (clusters with very few objects) in addition to outlier objects.

We may parametrize this strategy by a parameter such as $\alpha$ which specifies the ratio of the (smallest) \emph{reliable} linkages to be established at each step. A value close to zero  then corresponds to the standard variant, whereas $\alpha=1$ will be equal to the reliable strategy described in Algorithm\ref{alg:strong_agglomerative}. In this way, we can  provide even a richer family of alternative strategies for performing agglomerative clustering.

%\emph{Reliable} linkages are sometimes used for constructing $K$-nearest neighbor graphs for clustering, called mutual $K$-nearest neighbor graph~\cite{Brito1997,MaierHL09}. In this setting, each object is connected to its $K$ nearest neighbors, but a valid edge is established if and only if the objects at its two side select the other one. Then, the \emph{flat} clusters can be obtained by enumerating the connected components~\cite{Brito1997}. However, there are several fundamental differences between our approach and this method: The mutual $K$-nearest neighbor method establishes the edges between objects and not clusters.Then, most of the established edges occur inside small dense regions or small clusters. For this reason, as investigated in~\cite{Luxburg07,MaierHL09}, then the selected $K$ must be a large number, i.e. in the order of the number of objects $n$. However,  such a choice can easily lead to combining even well-separated clusters.  In particular, when the size of clusters varies or the clusters have elongated shapes, then even a fairly small $k$ can combine different clusters. Thereby, as suggested in ~\cite{Luxburg07,MaierHL09}, mutual $K$-nearest neighbor is  appropriate only for detecting significant cluster seeds. On the other hand, this method yields only flat clusters.

\section{Reliable Minimum Spanning Trees}

Minimum spanning trees (MSTs) are used in several applications such as transportation, computer and telecommunication networks \cite{Graham:1985},  image segmentation~\cite{Felzenszwalb:2004}, taxonomy learning~\cite{sneath1957dn09j} and power systems \cite{Mori1991}. It is known that the \emph{single} linkage method is equivalent to the Kruskal's algorithm~\cite{Kruskal1956} for computing a minimum spanning tree \cite{Gower1969}. Consistently, we study that Algorithm~\ref{alg:strong_agglomerative} with the \emph{single} criterion also yields a minimum spanning tree (Theorem~\ref{theorem:strong_single_mst}), which then its construction is adaptive w.r.t. the diverse density of the underlying data.

Before proving Theorem~\ref{theorem:strong_single_mst}, we first introduce some notations. Consider a forest (collection) of  trees $\left\{T_0, T_1, ...\right\}$. The distance (the edge weight) between the two trees $T_p$ and $T_q$ is computed according to the \emph{single} criterion, i.e.,
\begin{equation}
  \Delta_{pq} = \min_{i \in T_p} \min_{j \in T_q} \mathbf D_{ij}. \nonumber
\end{equation}
%Notice that $\Delta T_{pq}$ is symmetric, i.e. $\Delta T_{pq} = \Delta T_{qp}$.
The nearest tree from tree $T_p$, i.e. $T_p^*$, is obtained via
%\begin{equation}
  $T_p^* = \arg\min_{T_q}\Delta{pq},  q\ne p$.
%\end{equation}
Moreover, $e_p^*$ shows the edge corresponds to the nearest tree from $T_{p}$, i.e.,
\begin{equation}
  e_p^* = \arg\min_{e\in E} \Delta_{pq},  q\ne p \, ,  \nonumber
\end{equation}
where $E$ is the set of all current inter-tree edges.

\setcounter{theorem}{0}
\begin{theorem}
The dendrogram generated by Algorithm \ref{alg:strong_agglomerative} with the single linkage criterion computes a minimum spanning tree.
\label{theorem:strong_single_mst}
\end{theorem}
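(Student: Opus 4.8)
The plan is to recognize Algorithm~\ref{alg:strong_agglomerative} under the \emph{single} criterion as a conservative (mutual-nearest-neighbor) variant of Bor\r{u}vka's algorithm, and then to verify the three properties that jointly characterize a correct MST construction: every edge we commit is \emph{safe} (contained in a minimum spanning tree), the committed edges never close a cycle, and the process terminates having connected all $n$ objects. Throughout I would work under the generic assumption that the base distances $\mathbf D_{ij}$ are distinct (ties handled separately; see below), which makes the minimum spanning tree unique and the bookkeeping clean.

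First I would make precise which concrete graph edges a round commits. Under the single criterion $\Delta_{pq} = \min_{i\in T_p}\min_{j\in T_q}\mathbf D_{ij}$, so each inter-tree linkage is realized by a specific edge $(i,j)$ of the original graph attaining this minimum, and $e_p^*$ is exactly the cheapest edge leaving $T_p$. Since the objects are partitioned into the current trees, each realizing edge determines its cluster pair uniquely, so distinctness of the $\mathbf D_{ij}$ propagates to distinctness of the values $\Delta_{pq}$. A linkage between $T_p$ and $T_q$ is \emph{reliable} precisely when $\Delta_{pq}=\min_{r\ne p}\Delta_{pr}=\min_{r\ne q}\Delta_{qr}$, i.e.\ when $T_q=T_p^*$ and $T_p=T_q^*$ (mutual nearest trees); I would add to the output forest the realizing edge of every reliable linkage. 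Safety then follows from the cut property of minimum spanning trees: reliability implies in particular that the added edge equals $e_p^*$, the minimum-weight edge crossing the cut $(T_p,\,V\setminus T_p)$, and the cut property guarantees such an edge lies in the (unique) MST. By induction on the rounds, the accumulated edge set is therefore always a subforest of the MST.

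Next I would establish acyclicity within a round and termination. With distinct $\Delta_{pq}$, each tree $T_p$ has a unique nearest tree $T_p^*$, so a reliable linkage can only be the reciprocated edge $T_p\!-\!T_p^*$; hence $T_p$ lies in at most one reliable linkage and the reliable edges of a single round form a matching on the current trees. Consequently every connected component of $\mathcal G$ is a single edge, merging amounts to joining disjoint pairs, and no cycle is created. Global acyclicity is in any case automatic from the subforest-of-MST property above. For termination, Lemma~\ref{lemma:stable_linkage_smallest} ensures the globally shortest current linkage is reliable, so at least one merge occurs per round; the number of trees strictly decreases and the algorithm halts with a single tree. Since exactly $n-1$ safe, acyclic edges are added to fuse $n$ objects into one tree, that tree is spanning, and a spanning subforest of the MST must be the MST itself, completing the argument.

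The main obstacle I anticipate is the honest treatment of tied distances. Without distinctness, a tree may have several nearest neighbors at equal distance, so $\mathcal G$ can contain connected components larger than a single edge, and one must check that collapsing such a component still corresponds to adding an \emph{acyclic} set of realizing edges of minimal total cost rather than inadvertently including a redundant (cycle-closing) edge. I would resolve this by fixing a consistent total order on the edges to break ties, which reduces the tied case to the distinct-weight analysis above: the cut-property safety argument is unaffected, each round again reduces to a matching, and within each collapsed component only a spanning subset of the reliable edges is retained, preserving both acyclicity and minimality of the resulting spanning tree.
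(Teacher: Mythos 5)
Your proof is correct, and at its core it uses the same safety idea as the paper: a reliable linkage is in particular the minimum-weight edge leaving $T_p$, so the cut property (the paper phrases it as an informal exchange argument, citing the generalized greedy framework) guarantees it belongs to the MST. But your write-up supplies several ingredients the paper's two-paragraph proof silently skips, and these are genuine content rather than pedantry. First, the paper never addresses what happens when several linkages are committed \emph{simultaneously} in one round; your observation that under distinct weights the $\Delta_{pq}$ inherit distinctness (since the trees partition the objects, distinct cluster pairs are realized by distinct base edges), so that mutual nearest neighbors form a matching and every component of $\mathcal G$ is a single edge, is exactly the missing round-level acyclicity argument. Second, the paper does not argue termination at all; your appeal to Lemma~\ref{lemma:stable_linkage_smallest} to guarantee at least one reliable linkage per round closes that. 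Third, and most importantly, the tie case is where the paper's claim is not even literally well-formed: with three mutually equidistant clusters, all three linkages are reliable, $\mathcal G$ contains a triangle, and ``the edges selected at each step belong to the final MST'' cannot hold for all of them simultaneously. Your tie-breaking resolution is sound; one remark that would make it cleaner is that all reliable linkages within a single connected component of $\mathcal G$ automatically have \emph{equal} weight (adjacent reliable edges share a cluster and both attain that cluster's minimum, so equality propagates along paths), hence retaining any spanning subtree of a component costs the same and each retained edge remains a minimum crossing edge, so minimality survives the collapse without perturbing the algorithm's actual comparisons. In short: same route as the paper in spirit, but yours is the proof the theorem actually needs, while the paper's is a safety sketch assuming implicitly distinct weights and one merge at a time.
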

\begin{proof}[Proof sketch]
Consider a forest of trees $T_0, T_1, \cdots$. According to the connectivity condition of the final minimum spanning tree, every tree $T_p$ should be connected via an edge to the rest of the MST. This edge should be $e_p^*$, i.e., an edge (linkage) with minimal weight  among the edges of $T_p$, to keep the spanning tree minimal. Otherwise, if a larger edge is selected, then, the resultant spanning tree will have a larger total weight (i.e., a contradiction occurs). The linkage suggested by Algorithm~\ref{alg:strong_agglomerative} (with the \emph{single} criterion) satisfies this condition: The selected linkage is the smallest linkage connected to both $T_p$ and $T_p^*$ (the tree at the other side).

Hence, at the beginning, we consider each object as a separate tree, where all must belong to the final minimum spanning tree. Then, according to the aforementioned argument and based on induction, the edges selected at each step belong to the final MST. Thus, the final tree will be a minimum spanning tree.
\end{proof}

In this context, the \emph{generalized greedy algorithm}~\cite{Gabow1986} provides a general framework for computing minimum spanning trees, by showing that the edge  $e_{p*}$ is a consistent choice with a final minimum spanning tree.
%Given the collection of nonempty and disjoint trees $\{T_p\}$, assume we already know that all the trees are included in a minimum spanning tree to be constructed. Then,  one can show that the edge  $e_{p*}$ will be part of the minimum spanning tree~\cite{Gabow1986,Christofides:1975}.
%Consider a set of singleton subtrees $T_0, ..., T_{n-1}$, where each $T_i$ contains only the $i^{th}$ object (node).
Thereby, a greedy MST algorithm, at each step, i) picks $T_p$ and $T_p^*$, i.e., two candidate trees where at least one is the nearest neighbor of the other,
ii) combines them via the smallest edge $e_p^*$ to build a larger tree, and iii) removes the selected trees $T_p$ and $T_p^*$. The procedure continues until only a single tree with $n$ nodes remains, which is a MST.

Different algorithms, e.g. Kruskal's and Prim's \cite{Prim1957}, differ only in the way they pick the candidate trees at each step.
Kruskal's, at each step, picks a pair of trees that have a minimal distance among all pairs of trees. However, Prim's produces the MST via growing only one tree, say $T_0$, by  iteratively attaching a singleton tree which has minimal distance to that, until it contains all the singleton trees.

Algorithm~\ref{alg:strong_agglomerative} with the \emph{single} criterion yields an alternative viewpoint on the construction of  MSTs. According to the \emph{generalized greedy algorithm}, to combine two candidate trees, it is sufficient that one of them occurs in the nearest neighborhood of the other. However, Algorithm~\ref{alg:strong_agglomerative} requires that both trees mutually occur insides the nearest neighborhood of each other. As shown, e.g. in  Figures~\ref{fig:Context-Sensitive-Clust2} and~\ref{fig:Context-Sensitive-Clust3}, such a strategy yields a robust and adaptive minimum spanning tree. In summary,

%\begin{enumerate}
\textbf{I.} the standard agglomerative method, in a very strict way, selects only one \emph{reliable} linkage at each step, the one which has a minimal length (weight).

\textbf{II.} On the other hand, the \emph{generalized greedy algorithm} for MST construction allows one to select any edge which occurs inside the nearest neighbors of one of the trees, regardless of being \emph{reliable} or not (which might not be robust).

\textbf{III.} Algorithm~\ref{alg:strong_agglomerative}  follows an intermediate strategy. It suggests to select all the \emph{reliable} linkages (edges) which yields adaptation and flexibility (compared to the former approach) and robustness (compared to the latter approach).

\textbf{IV.} Parameterization of Algorithm~\ref{alg:strong_agglomerative} (by $\alpha$, as discussed before) can lead to an even larger family of different (\emph{reliable}) minimum spanning tree algorithms.

We note that the final MST obtained by Algorithm~\ref{alg:strong_agglomerative} could be the same as the Kruskal's MST. However, the order of  selecting the edges differs. Thus, in particular, if we stop early constructing the MST, then the available solution could be different.
%\end{enumerate}

\section{Experiments}
We experimentally evaluate the performance of the reliable agglomerative  strategy on a variety of real-world datasets and compare it against the standard approach. In these datasets, each object (i.e., document, image, etc) is represented by a vector according to the respective features. For the text documents, we use the tf-idf vectors.  We compute the pairwise dissimilarities between the objects according to squared Euclidean distance measure.

\paragraph{\textbf{Data}}
The first datasets are selected from the UCI data repository \cite{UCIDua:2019}.
\begin{enumerate}
\item \emph{Ecoli}: contains the information of $336$ protein localization sites in $7$ categories.
\item \emph{Hayes Roth}:  is related to a study on human subjects which contains $160$ instances and $3$ classes.
\item \emph{Iris}: contains the information of $150$ iris plants grouped in $3$ classes.
\item \emph{Lung Cancer}: includes $3$ types of $32$ instances of pathological lung cancer.
\item \emph{Perfume}: consists of odors of $20$ different perfumes (classes), where there are in total $560$ measurements.
\item \emph{Seeds}: includes $210$ measurements of geometrical properties of kernels belonging to different varieties of wheat.
\item \emph{Wine}: contains $178$ measurements of a chemical analysis of  different types of wines.
\end{enumerate}

We also use the three main subsets of 20-newsgroup data collection:

\begin{enumerate}
\item \emph{COMP}: a subset of  $1,955$ documents in five groups:  \emph{ `comp.graphics', `comp.windows.x', `comp.os.ms-windows.misc', `comp.sys.ibm.pc.hardware', `comp.sys.mac.hardware'}.
\item \emph{REC}: a subset of  $1,590$ documents in four groups related to race and sports: \emph{`rec.autos', `rec.motorcycles', `rec.sport.baseball', `rec.sport.hockey'}.
\item \emph{SCI}: a subset of  $1,579$ documents in four groups related to science: \emph{`sci.crypt', `sci.electronics', `sci.med', `sci.space'}.
\end{enumerate}

In addition, we investigate the performance of different strategies on real datasets collected by a document processing corporation.
%Xerox Corporation.
The original dataset (called \emph{Real I}) contains the vectors of $675$ scanned documents each represented in a $4,096$ dimensional space. This dataset contains $56$ clusters which several of them have only one or few documents. Then, by removing the clusters with only one or two documents, we obtain a new dataset, called \emph{Real II} ($634$ documents) .  Finally, we obtain \emph{Real III} by keeping the clusters that have at least $5$ documents ($592$ documents).
%Thus,
%\\
%%\begin{enumerate}
%11) \emph{Real I}: The original real-world dataset which contains vectors of $675$ documents, grouped in $56$ clusters.\\
%12) \emph{Real II}:  vectors of  $634$ documents in $34$ clusters.\\
%13) \emph{Real III}:  vectors of $592$ documents in $21$ clusters.
%%\end{enumerate}

\begin{table*}[th!]
\begin{changemargin}{-.5in}{-.5in}
\caption{Performance of standard (stnd) and reliable (rlbl) agglomerative strategies  w.r.t. Mutual Information. The reliable strategy often yields the improvment of the results.
}
\centering % used for centering table
\begin{tabular}{|| c || c c || c c || c c || c c || c c ||} % centered columns (4 columns)
\hline\hline %inserts double horizontal lines
 &\multicolumn{2}{c||}{single}&\multicolumn{2}{c||}{complete}&\multicolumn{2}{c||}{average}&\multicolumn{2}{c||}{centroid}&\multicolumn{2}{c||}{Ward}\\
 \hline
 dataset & stnd & rlbl & stnd & rlbl & stnd & rlbl & stnd & rlbl & stnd & rlbl \\ [0.5ex] % inserts table
%heading
\hline % inserts single horizontal line
\emph{Ecoli} & 0.0564 & 0.0564 & 0.6235 & 0.6235 & 0.5907 & \textbf{0.6812} & 0.0462 & 0.0383 & 0.5473 & 0.5445 \\  %[1ex] % [1ex] adds vertical space
\hline %inserts single line
\emph{Hayes Roth} & 0.0161 & \textbf{0.2336} & 0.0354 & \textbf{0.2338} & 0.1629 & \textbf{0.2338} & 0.0000 & 0.0030 & 0.0249 & 0.0808 \\  %[1ex] % [1ex] adds vertical space
\hline %inserts single line
%\emph{Iris} & 0.5821 & 0.5821 & 0.6963 & 0.6963 & 0.6301 & 0.6264 & 0.7934 & 0.7934 & 0.7578 & 0.7578 \\  %[1ex] % [1ex] adds vertical space
\emph{Iris} & 0.5821 & 0.5821 & 0.6963 & 0.6963 & 0.6301 & 0.6301 & \textbf{0.7934} & \textbf{0.7934} & 0.7578 & 0.7578 \\  %[1ex] % [1ex] adds vertical space
\hline %inserts single line
\emph{Lung Cancer} & 0.0149 & 0.0149 & 0.1537 & \textbf{0.2070} & 0.0239 & 0.1413 & 0.0000 & 0.0000 & 0.1766 & 0.1684 \\  %[1ex] % [1ex] adds vertical space
\hline %inserts single line
\emph{Perfume} & 0.7024 & 0.7024 & 0.7332 & 0.7332 & 0.7601 & 0.7595 & 0.7544 & 0.7664 & \textbf{0.8246} & \textbf{0.8246} \\  %[1ex] % [1ex] adds vertical space
\hline %inserts single line
\emph{Seeds} & 0.0283 & 0.0283 & 0.6029 & 0.6029 & 0.6083 & 0.7055 & 0.6034 & 0.6140 & \textbf{0.7243} & \textbf{0.7243} \\  %[1ex] % [1ex] adds vertical space
\hline %inserts single line
\emph{Wine} & 0.0237 & 0.0237 & \textbf{0.4307} & \textbf{0.4307} & 0.3223 & 0.3452 & 0.3251 & 0.3251 & 0.4097 & 0.4097 \\  %[1ex] % [1ex] adds vertical space
\hline %inserts single line
\emph{COMP} & 0.0604 & 0.0604 & 0.1459 & 0.1459 & 0.0453 & \textbf{0.1611} & 0.0312 & 0.0341 & 0.1021 & 0.1140   \\  %[1ex] % [1ex] adds vertical space   !!!!!!!! last two entried baraks!!!!!!!!!!!
\hline %inserts single line
\emph{REC} & 0.0228 & 0.0402 & 0.1793 & 0.1793 & 0.0330 & 0.2375 & 0.0161 & 0.0315 & \textbf{0.2574} & \textbf{0.2574}   \\  %[1ex] % [1ex] adds vertical space
\hline %inserts single line
\emph{SCI} &  0.0617 & 0.0617 & 0.0823 & 0.0823 & 0.0387 & 0.1557 & 0.0339 & 0.0651 & 0.1997 & \textbf{0.3042}  \\  %[1ex] % [1ex] adds vertical space
\hline %inserts single line
\emph{Real I} & 0.5782 & 0.5782 & 0.7114 & 0.7114 & 0.7813 & \textbf{0.8237} &0.0785 & 0.0670 & 0.5976 & 0.7546  \\  %[1ex] % [1ex] adds vertical space
\hline %inserts single line
\emph{Real II} & 0.5711 & 0.5711 & 0.7430 & 0.7430 & 0.7704 & 0.8130 & 0.0458 & 0.0268 & 0.6542 & \textbf{0.8274}  \\  %[1ex] % [1ex] adds vertical space
\hline %inserts single line
\emph{Real III} & 0.5389 & 0.5389 & 0.7581 & 0.7581 & 0.7209 & 0.7733 & 0.0132 & 0.0145 & 0.7156 & \textbf{0.8697} \\  %[1ex] % [1ex] adds vertical space
\hline %inserts single line
\end{tabular}
\label{table:results_mutual_info} % is used to refer this table in the text
\end{changemargin}
\end{table*}

\begin{table*}[th!]
\begin{changemargin}{-.5in}{-.5in}
\caption{Performance of standard (stnd) and reliable (rlbl) strategies with different criteria w.r.t. Rand score, where the reliable strategy usually gives superior results.
}
\centering % used for centering table
\begin{tabular}{|| c || c c || c c || c c || c c || c c ||} % centered columns (4 columns)
\hline\hline %inserts double horizontal lines
 &\multicolumn{2}{c||}{single}&\multicolumn{2}{c||}{complete}&\multicolumn{2}{c||}{average}&\multicolumn{2}{c||}{centroid}&\multicolumn{2}{c||}{Ward}\\
 \hline
 dataset & stnd & rlbl & stnd & rlbl & stnd & rlbl & stnd & rlbl & stnd & rlbl \\ [0.5ex] % inserts table
%heading
\hline % inserts single horizontal line
\emph{Ecoli} & 0.0386 & 0.0386 & 0.6908 & 0.6908 & 0.6974 & \textbf{0.7509} & 0.0297 & 0.0252 & 0.4686 & 0.3914 \\  %[1ex] % [1ex] adds vertical space
\hline %inserts single line
\emph{Hayes Roth} & 0.0185 & 0.2086 & 0.0327 & \textbf{0.2451} & 0.1620 & \textbf{0.2451} & 0.0000 & 0.0058 & 0.0496 & 0.1073 \\  %[1ex] % [1ex] adds vertical space
\hline %inserts single line
%\emph{Iris} & 0.5638 & 0.5638 & 0.6423 & 0.6423 & 0.5659 & 0.5676 & 0.7592 & 0.7592 & 0.7312 & 0.7312 \\  %[1ex] % [1ex] adds vertical space
\emph{Iris} & 0.5638 & 0.5638 & 0.6423 & 0.6423 & 0.5659 & 0.5659 & \textbf{0.7592} & \textbf{0.7592} & 0.7312 & 0.7312 \\  %[1ex] % [1ex] adds vertical space
\hline %inserts single line
\emph{Lung Cancer} & 0.0371 & 0.0371 & 0.2809 & \textbf{0.3533} & 0.1327 & 0.1170 & 0.0000 & 0.0000 & 0.3388 & 0.1698 \\  %[1ex] % [1ex] adds vertical space
\hline %inserts single line
\emph{Perfume} & 0.4667 & 0.4667 & 0.5096 & 0.5096 & 0.5651 & 0.5600 & 0.5600 & 0.5749 & \textbf{0.6590} & \textbf{0.6590} \\  %[1ex] % [1ex] adds vertical space
\hline %inserts single line
\emph{Seeds} & 0.0025 & 0.0025 & 0.5461 & 0.5461 & 0.5543 & \textbf{0.7320} & 0.5664 & 0.5626 & 0.7132 & 0.7132 \\ %[1ex] % [1ex] adds vertical space
\hline %inserts single line
\emph{Wine} & 0.0054 & 0.0054 & \textbf{0.3708} & \textbf{0.3708} & 0.2926 & 0.3204 & 0.3266 & 0.3266 & 0.3684 & 0.3684 \\ %[1ex] % [1ex] adds vertical space
\hline %inserts single line
\emph{COMP} & 0.0531 & 0.0531 & 0.1331 & 0.1331 & 0.0040 & \textbf{0.1459} & 0.0119 & 0.0138 & 0.0290 & 0.0296   \\  %[1ex] % [1ex] adds vertical space    !!!!!!!! last two entried baraks!!!!!!!!!!!
\hline %inserts single line
\emph{REC} & 0.0262 & 0.0742 & 0.0905 & 0.0905 & 0.0025 & 0.2266 & 0.0014 & 0.0052 & \textbf{0.2162} & \textbf{0.2162}   \\  %[1ex] % [1ex] adds vertical space
\hline %inserts single line
\emph{SCI} & 0.0884 & 0.0884 & 0.0108 & 0.0108 & 0.0034 & 0.0782 & 0.0493 & 0.0588 & 0.0908 & \textbf{0.1688}   \\  %[1ex] % [1ex] adds vertical space
\hline %inserts single line
\emph{Real I} & 0.4296 & 0.4296 & 0.4133 & 0.4133 & 0.4687 & \textbf{0.5699} & 0.0401 & 0.0403 & 0.2649 & 0.4969 \\  %[1ex] % [1ex] adds vertical space
\hline %inserts single line
\emph{Real II} & 0.4409 & 0.4409 & 0.4142 & 0.4142 & 0.5581 & \textbf{0.6685} & 0.0283 & 0.0198 & 0.3193 & 0.6176 \\  %[1ex] % [1ex] adds vertical space
\hline %inserts single line
\emph{Real III} & 0.4235 & 0.4235 & 0.4414 & 0.4414 & 0.6850 & 0.6443 & 0.0123 & 0.0151 & 0.4101 & \textbf{0.7042} \\  %[1ex] % [1ex] adds vertical space
\hline %inserts single line
\end{tabular}
\label{table:results_rand_score} % is used to refer this table in the text
\end{changemargin}
\end{table*}

\begin{table*}[htb!]
\begin{changemargin}{-.5in}{-.5in}
\caption{Performance of standard (stnd) and reliable (rlbl)  strategies w.r.t. V-measure.
The reliable strategy provides better results compared to the standard variant.
}
\centering % used for centering table
\begin{tabular}{|| c || c c || c c || c c || c c || c c ||} % centered columns (4 columns)
\hline\hline %inserts double horizontal lines
 &\multicolumn{2}{c||}{single}&\multicolumn{2}{c||}{complete}&\multicolumn{2}{c||}{average}&\multicolumn{2}{c||}{centroid}&\multicolumn{2}{c||}{Ward}\\
 \hline
 dataset & stnd & rlbl & stnd & rlbl & stnd & rlbl & stnd & rlbl & stnd & rlbl \\ [0.5ex] % inserts table
%heading
\hline % inserts single horizontal line
\emph{Ecoli} & 0.1355 & 0.1355 & 0.6789 & 0.6789 & 0.6683 & \textbf{0.7115} & 0.1008 & 0.0819 & 0.6123 & 0.5658 \\  %[1ex] % [1ex] adds vertical space
\hline %inserts single line
\emph{Hayes Roth} & 0.0579 & \textbf{0.3472} & 0.0556 & 0.3010 & 0.2164 & 0.3010 & 0.0000 & 0.0203 & 0.0412 & 0.0995 \\  %[1ex] % [1ex] adds vertical space
\hline %inserts single line
%\emph{Iris} & 0.7175 & 0.7175 & 0.7221 & 0.7221 & 0.7046 & 0.6837 & 0.8057 & 0.8057 & 0.7701 & 0.7701 \\  %[1ex] % [1ex] adds vertical space
\emph{Iris} & 0.7175 & 0.7175 & 0.7221 & 0.7221 & 0.7046 & 0.7046 & \textbf{0.8057} & \textbf{0.8057} & 0.7701 & 0.7701 \\  %[1ex] % [1ex] adds vertical space
\hline %inserts single line
\emph{Lung Cancer} & 0.0287 & 0.0287 & 0.1810 & \textbf{0.2303} & 0.0742 & 0.1743 & 0.0000 & 0.0000 & 0.2140 & 0.2030 \\  %[1ex] % [1ex] adds vertical space
\hline %inserts single line
\emph{Perfume} & 0.8117 & 0.8117 & 0.8251 & 0.8251 & 0.8437 & 0.8417 & 0.8380 & 0.8442 & \textbf{0.8796} & \textbf{0.8796} \\  %[1ex] % [1ex] adds vertical space
\hline %inserts single line
\emph{Seeds} & 0.0663 & 0.0663 & 0.6152 & 0.6152 & 0.6204 & 0.7094 & 0.6150 & 0.6260 & \textbf{0.7309} & \textbf{0.7309} \\ %[1ex] % [1ex] adds vertical space
\hline %inserts single line
\emph{Wine} & 0.0615 & 0.0615 & \textbf{0.4423} & \textbf{0.4423} & 0.4049 & 0.3920 & 0.4277 & 0.4277 & 0.4161 & 0.4161 \\ %[1ex] % [1ex] adds vertical space
\hline %inserts single line
\emph{COMP} & 0.0351 & 0.0351 & 0.1857 & 0.1857 & 0.0754 & 0.1922 & 0.0515 & 0.0558 & 0.1323 & 0.1468   \\  %[1ex] % [1ex] adds vertical space    !!!!!!!! last two entried baraks!!!!!!!!!!!
\hline %inserts single line
\emph{REC} &  0.0307 & 0.0614 & 0.2310 & 0.2310 & 0.0609 & 0.2737 & 0.0308 & 0.0569 & \textbf{0.3124} & \textbf{0.3124}  \\  %[1ex] % [1ex] adds vertical space
\hline %inserts single line
\emph{SCI} &  0.0518 & 0.0518 & 0.1337 & 0.1337 & 0.0714 & 0.2005 & 0.0270 & 0.0339 & 0.2546 & \textbf{0.3407}  \\  %[1ex] % [1ex] adds vertical space
\hline %inserts single line
\emph{Real I} & 0.7708 & 0.7708 & 0.8484 & 0.8484 & 0.8409 & \textbf{0.8714} & 0.2181 & 0.2016 & 0.7932 & 0.8421 \\  %[1ex] % [1ex] adds vertical space
\hline %inserts single line
\emph{Real II} & 0.7570 & 0.7570 & 0.8221 & 0.8221 & 0.8361 & \textbf{0.8725} & 0.1384 & 0.0925 & 0.8023 & 0.8614 \\  %[1ex] % [1ex] adds vertical space
\hline %inserts single line
\emph{Real III} & 0.7197 & 0.7197 & 0.8155 & 0.8155 & 0.8427 & 0.8408 & 0.0510 & 0.0594 & 0.8238 & \textbf{0.8951} \\  %[1ex] % [1ex] adds vertical space
\hline %inserts single line
\end{tabular}
\label{table:results_v_measure} % is used to refer this table in the text
\end{changemargin}
\end{table*}

\paragraph{\textbf{Evaluation}}
To investigate the quality of a dendrogram, \emph{cophenetic correlation} \cite{Sokal1962} is sometimes employed specially in biostatistics which measures the correlation between the dendrogram and the base dissimilarities between the objects. However, this evaluation measure has several issues, e.g. i) it considers only the direct distances and discards the manifolds or the elongated structures, and ii) its value is very sensitive to the way the inter-cluster distances are computed. For example, the two \emph{single} and \emph{Ward} criteria might lead to the same dendrograms, but their \emph{cophenetic correlation} could significantly differ, since they compute different types of distances between the clusters (which constitute the elements of a dendrogram).
However, in our experiments, we have access to the ground-truth, i.e. to the true labels of the objects. Thus, we may use early stopping up to  $K$ clusters or eliminate the last $K-1$ linkages from a dendrogram to produce $K$ clusters. There exist more involved methods to convert a dendrogram into a set of $K$ clusters, but they require fixing critical parameters in advance which finding their correct values is non-trivial in an unsupervised setting such as clustering. With both strategies, ties might occur when producing exactly $K$ clusters.
We tackle the problem in the same way as the common implementations do, e.g. we break the ties according to the order (index) of the clusters, where all other tricks are applicable to both approaches as well. Moreover, we observe such ties usually occur at the lower levels of the dendrogram, i.e., for a very large $K$. For a rather small $K$, which is the case in many clustering problems,  such ties are very rare. In real data  it does not often happen  that many real clusters are mutually the nearest neighbors of each other. Having multiple reliable linkages to establish occurs at the low or intermediate levels. Thus, at the higher level, where we remove the linkages, ties are not common.

We compare the true and the computed clusters according to three criteria:
\begin{enumerate}
\item Normalized Mutual Information \cite{Vinh:2010}, which measures the mutual information between the true and the estimated  solutions.
\item Normalized Rand score \cite{hubert1985comparing}, which computes the similarity between the two solutions.
%. \footnote{V-measure yields very consistent results.}
\item V-measure~\cite{RosenbergH07}, which obtains the harmonic mean of homogeneity and completeness.
\end{enumerate}

We compute the normalized variant of these measures, such that they yield zero for randomly estimated solutions and thereby any positive score indicates a (partially) consistent solution.

\paragraph{\textbf{Results}}
Tables \ref{table:results_mutual_info}, \ref{table:results_rand_score} and \ref{table:results_v_measure}   show the performance scores in order w.r.t. Normalized Mutual Information, Normalized Rand score and V-measure, where the best results for each dataset are bolded. We observe that on different datasets, the reliable agglomeration strategy always contributes to the best results. In most cases, it improves significantly the best results of the standard strategy, and in fewer cases it yields consistent results with that. Moreover, in few cases it could happen that for a non-optimal criterion, the reliable variant yields (slightly) worse results. However, such a criterion is not the best choice and the respective scores are not high compared to the alternatives. For example, on \emph{Real I} and \emph{Real II}   with the \emph{centroid} criterion, the standard strategy yields slightly better scores than the reliable strategy. However,  the \emph{centroid} criterion is not the best option and yields anyway very low scores. With a more appropriate criterion (e.g. \emph{average} and \emph{Ward}), the reliable strategy gives significantly higher scores. Note that the different evaluation measures are often consistent, but in some cases they might disagree. For example, on the \emph{Seeds} dataset, Normalized Mutual Information suggests the \emph{Ward} criterion as the best option, but Normalized Rand score selects  the \emph{average} criterion, although \emph{Ward} still yields high scores.
Finally, it is notable that we observe similar experimental runtimes for the two strategies, as they perform similar operations. For example, on the \emph{COMP} dataset and with the \emph{single} criterion, the runtimes of the standard and reliable strategies are $0.9135$ and $0.9208$ seconds. With the \emph{average} criterion, the runtimes  respectively are $0.5338$ and $0.5326$ seconds.

%\vspace{-5mm}
\section{Conclusion}
We investigated  an adaptive and density-consistent strategy for agglomerative clustering, wherein at each step we establish all the \emph{reliable} linkages, instead of establishing only the smallest one (consistent with the high-level method in \cite{Bruynooghe1977}). The two clusters connected by a \emph{reliable} linkage share similar properties, such that they select each other as a nearest neighbor. This strategy enables the dendrogram to be adaptive w.r.t. the diverse densities of different clusters and supports early stopping the clustering procedure.
%Moreover, it can provide separating structure from the oulier objects/clusters at different levels, which will be investigated in future work.
In the following,
%similar to the connection between \emph{single} linkage clustering and the Kruskal's algorithm
we studied how \emph{reliable agglomerative clustering} with the \emph{single} criterion can be used to produce a minimum spanning tree.  Finally, we performed  experiments on several real-world datasets to investigate  the  performance of the reliable agglomerative strategy.

\vspace{2mm}
\noindent\textbf{\emph{Acknowledgement.}} This work was partially done at Xerox Research Centre Europe (XRCE).

\bibliography{references}
\bibliographystyle{plain}

\end{document}